\begin{document}

\title{Obtaining Optimal Spiking Neural Network in Sequence Learning via CRNN-SNN Conversion}
\titlerunning{Obtaining Optimal Spiking Neural Network via CRNN-SNN Conversion}

\author{Jiahao Su\inst{1}\inst{2}\thanks{Work done during the internship at Shanghai Jiao Tong University}
\and Kang You\inst{2} \and Zekai Xu\inst{2} \and Weizhi Xu \inst{1}\and Zhezhi He\inst{2}\thanks{Corresponding author: zhezhi.he@sjtu.edu.cn}}
\authorrunning{J. Su et al.}

\institute{
School of Information Science and Engineering, Shandong Normal University, Jinan, China \\ \email{tssujiahao@gmail.com} \and
School of Electronic Information and
Electrical Engineering, Shanghai Jiao Tong University, Shanghai,
China
}

\maketitle
\begin{abstract}
Spiking neural networks (SNNs) are becoming a promising alternative to conventional artificial neural networks (ANNs) due to their rich neural dynamics and the implementation of energy-efficient neuromorphic chips. 
However, the non-differential binary communication mechanism makes SNN hard to converge to an ANN-level accuracy. When SNN encounters sequence learning, the situation becomes worse due to the difficulties in modeling long-range dependencies. To overcome these difficulties, researchers developed variants of LIF neurons and different surrogate gradients but still failed to obtain good results when the sequence became longer (\textit{e.g.}, $>$500). 
Unlike them, we obtain an optimal SNN in sequence learning by directly mapping parameters from a quantized CRNN. 
We design two sub-pipelines to support the end-to-end conversion of different structures in neural networks, which is called CNN-Morph (CNN $\rightarrow$ QCNN $\rightarrow$ BIFSNN) and RNN-Morph (RNN$ \rightarrow$ QRNN $\rightarrow$ RBIFSNN). Using conversion pipelines and the s-analog encoding method, the conversion error of our framework is zero. Furthermore, we give the theoretical and experimental demonstration of the lossless CRNN-SNN conversion. 
Our results show the effectiveness of our method over short and long timescales tasks compared with the state-of-the-art learning- and conversion-based methods. We reach the highest accuracy of \textbf{99.16\%} (0.46 $\uparrow$) on S-MNIST, \textbf{94.95\%} (3.95 $\uparrow$) on PS-MNIST (sequence length of 784) respectively, and the lowest loss of \textbf{0.057} (0.013 $\downarrow$) within \textbf{8} time-steps in collision avoidance dataset.
\keywords{CRNN-SNN conversion  \and Sequence learning }
\end{abstract}

\section{Introduction} \label{intro}
Spiking Neural Networks (SNNs), known as third-generation neural networks ~\cite{maass1997networks}, are inspired by the biological structure of the brain.
Recent studies have shown that brain-inspired neuron models (\textit{e.g.}, integrate and fire (IF) neuron), can obtain results comparable to ANN networks with high energy efficiency and low latency~\cite{psn,hu2023fast}.
 Unlike traditional ANNs, SNNs use discrete spikes to convey information between neurons. Such binary communication mechanism can be executed smoothly on a neuromorphic chip (\textit{e.g.}, Truenorth\cite{akopyan2015truenorth}, Loihi\cite{davies2018loihi}).

SNNs and RNNs share similarities in many ways, like the design of hidden states and the ability to learn through time. Many efforts have been made in RNN to improve long-term learning dependencies and have achieved astonishing results in sequence learning \cite{IRNN,li2018independently}. Attracted by the performance of RNNs, a question arises:
\emph{how to obtain SNNs that can perform as well as RNNs in sequence learning?}
An obstacle to answering this question is the non-differential binary communication mechanism of SNN, which results in significant information loss. 
To address the problem, surrogate gradient (SG) based back-propagation methods \cite{panda2020toward,neftci2019surrogate} and variants of neurons based learning \cite{huawei2,xu2024bkdsnn} was introduced. However, such approaches still suffer from the spike vanishing phenomenon \cite{panda2020toward} and inaccurate gradient approximation \cite{neftci2019surrogate}. When the temporal sequence becomes longer, SNN cannot achieve the ANN-level accuracy (\textit{e.g.}, SNN-SoTa is 91\% while RNN-SoTa is 97.2\% in permuted-sequential MNIST).

\begin{figure}[t]
\centering
\includegraphics[width=0.9\textwidth]{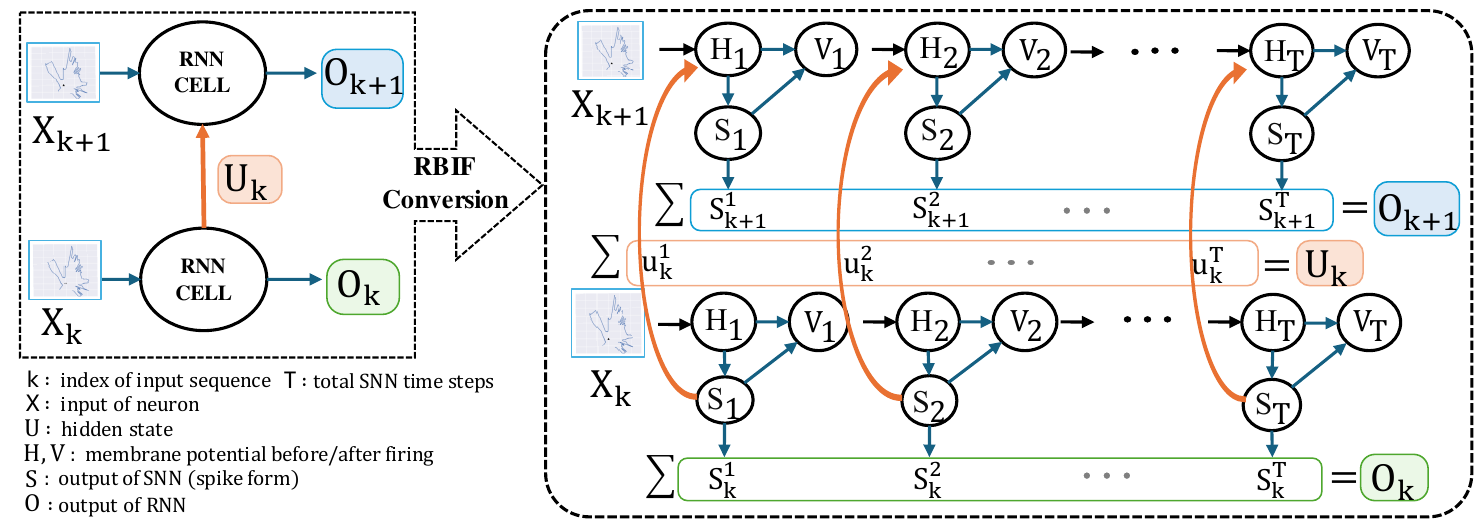}
\caption{\textbf{RNN-RBIF conversion}. A quantized RNN (left) is converted to its corresponding RBIFSNN (right) via QCRC framework without accuracy loss.} 
\label{head}
\end{figure}

Instead of expecting that the non-differential binary network directly converges to ANN-level accuracy through learning, the conversion-based method obtains an SNN by mapping parameters from its counterpart ANN. 
However, existing neuron models in conversion methods \cite{implem,qffs,qcfs} are not compatible with RNN cells because the data in the recurrent structure remain in float type after conversion, which is not allowed. In addition, it still suffers from conversion errors, and these errors will be magnified over time. To address the aforementioned issues, we propose the Recurrent Bipolar Integrate-and-Fire (RBIF) neuron to support the RNN-SNN conversion (as shown in \cref{head}), which guarantees the spike form of the recurrent connection after conversion. Furthermore, we propose a comprehensive framework that supports lossless \textbf{Q}uantized \textbf{C}onvolutional and \textbf{R}ecurrent neural networks to SNN \textbf{C}onversion (QCRC) end-to-end. Our main contributions are summarized as follows:
\begin{itemize}
    \item We propose the Recurrent Bipolar Integrate-and-Fire (RBIF) neuron to address the incompatibility problem of RNN cell. We further give theoretical and experimental proofs of CRNN-SNN conversion. 
    \item We obtain optimal SNN in sequence learning via CRNN-SNN conversion framework, which includes a conversion pipeline with two branches, namely CNN-Morph and RNN-Morph, enabling the conversion of various types of networks into SNNs end-to-end.
    \item 
    We outperform SoTa learning-based works with the accuracy of 99.16\% (0.46 $\uparrow$) on S-MNIST and 94.95\% (3.95 $\uparrow$) on PS-MNIST. We also surpass SoTa conversion-based methods on the collision avoidance dataset, achieving the lowest loss at every time-step (\textit{e.g.}, a loss of 0.118 (0.118 $\downarrow$) at time-step 2, and a loss of 0.057 (0.013 $\downarrow$) at time-step 8). 
\end{itemize}



\section{Related Works}

\subsection{Relation of RNN and SNN}
Spiking neural networks have similarities to vanilla RNN and its variants in the same form. Since the change of membrane potential is related to time, an SNN can be understood as an RNN without recurrent connection \cite{neftci2019surrogate}.
Recurrent neural networks (RNNs) are powerful models for processing sequential data while spiking neural networks (SNNs) show huge potential for processing sequential event-based data.
To address the vanishing and exploding gradient problems during the training of RNN, long short-term memory (LSTM) \cite{hochreiter1997long} is proposed. In addition to adding the gate units in recurrent neurons, other works address the problem by weight initialization like IRNN \cite{IRNN} or changing the form of recurrent neurons like indRNN \cite{li2018independently}. 
Similar to RNNs, many efforts have been made to help SNNs learn long-term patterns. Variants of LIF (\textit{e.g.}, Adaptive LIF \cite{lsnn,huawei}, GLIF \cite{yao2023glif}) are proposed to enlarge the representation of neuronal behaviors. The RSNN that contains recurrent connections is adopted by \cite{huawei2,xing2020new}, resulting in better performance compared with feedforward-only connections. However, it still remains challenges to obtain an SNN with RNN-level performance in the dataset that RNNs are good at, such as sequential image classification and time series forecasting.

\subsection{ANN-to-SNN conversion}
The ANN-to-SNN conversion algorithm was first introduced in \cite{cao2015spiking} by changing the activation function to $ReLU$. \cite{diehl2015fast} presented two  ways to normalize the network weights (\textit{i.e.}, data-based and model-based normalization) to prevent the overestimating output activation. \cite{sengupta2019going,ding2021optimal} took threshold into consideration and proposed different normalization methods. By theoretically analyzing the conversion error between the source ANN and the converted SNN, \cite{qcfs,li2021free} achieved the high-performance ANN-SNN conversion with ultra-low latency. 
To mitigate the sequential error, a neuron that can trigger both positive and negative spikes was proposed, which has been widely used in recent works \cite{hu2023fast,qffs,wang2022signed,you2024ICML}.

The main idea of conversion is to map the firing rate of SNN to the output of quantized ReLU. This idea is in tune with our goal, which is bridging the recurrent dynamics of SNN and RNN. However, previous proofs of ANN-SNN conversion mainly focused on linear and convolutional layers, consequently, the effectiveness of the conversion method was only demonstrated on static datasets. 
In \cref{tab:Technical setting of related works}, we summarize the techniques and settings shared by state-of-the-art works of ANN-SNN conversion. It shows that our work could support different structures of the original ANN and different data types.
\begin{table}[t] 
\caption{\textbf{Technical settings of related works.} ``Eq.'' is the abbreviation of equivalence. ``\checkmark'' represents support. ``m-analog'' indicates the analog input is fed to SNN at every time-step. ``s-analog'' indicates the analog input is only fed to SNN at the first time-step, which is equal to the RNN input. 
}
\centering
\renewcommand\arraystretch{1.2}
\scalebox{1}
{
\resizebox{0.75\textwidth}{16mm}{
            \begin{tabular}{lcccc}
                \hline
                 & QCFS \cite{qcfs} & Offset \cite{offset} & Fast-snn \cite{hu2023fast} & Ours \\ \hline
                Encoding & m-analog & m-analog & m-analog & s-analog\\
                Neuron & IF & IF & signed IF & BIF/RBIF\\
                Theoretical Eq. of CNN  & \checkmark & \checkmark & \checkmark & \checkmark \\
                Theoretical Eq. of RNN & & & & \checkmark \\
                Experimental Eq. of CRNN &  & & & \checkmark \\
                Data Type & static &  static&static &static/temporal \\
                \bottomrule
            \end{tabular}
            }
    }
\label{tab:Technical setting of related works}
\end{table}
\subsection{Quantization in ANN Compression}
Quantization refers to techniques for performing computations and storing tensors at lower bit-widths than floating point precision. The mathematics of quantization for neural networks is as follows:
\begin{equation}
        x_q = clip(round( \frac{x}{s} + z), a, b).
\end{equation}
where $s$ and $z$ denote quantization scale and zero point respectively. $clip(\cdot, a, b)$ function sets the the lower bound $a$ and upper bound $b$. There are two main classes of algorithms: post-training quantization (PTQ) and quantization-aware training (QAT). Compared with PTQ, QAT usually leads to a more robust model. It inserts some fake modules in the computational graph of the model to simulate the effect of the quantization during training, where the straight-through estimator (STE) \cite{STE} is a typical adoption to approximate the gradient of the quantization function. To further mitigate the quantization error, LSQ \cite{lsq} makes $s$ as learnable as other network parameters (\textit{i.e.}, $z, a, b$). We adopt LSQ as our quantization method, following the approach of previous works \cite{bu2022optimal,hu2023fast}.

\section{Method} 
\label{method}
\subsection{SNN Model}

\begin{figure}[t]
\centering
\includegraphics[width=0.95\textwidth]{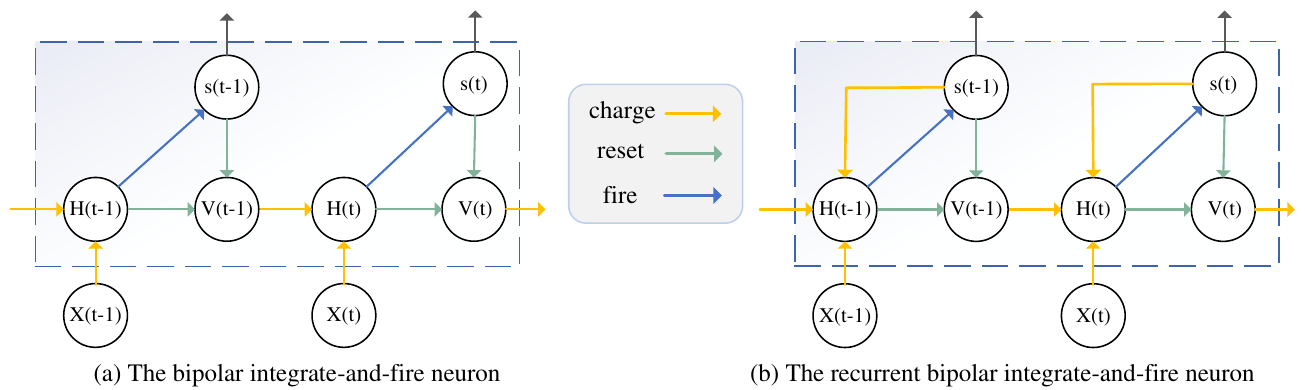}
\caption{The computational graphs of BIF neuron and the RBIF neuron. The recurrent connections in figure (b) are in spike form, which charges $s_{k-1}(t)$ to the H(t) at time-step t. } \label{RBIF}
\end{figure}
\subsubsection{Bipolar Integrate-and-Fire Neuron}
To mitigate the sequential error (the phenomenon
that spikes are generated in spiking neurons where
they should not be), we adopt bipolar integrate-and-fire (BIF) neuron as our basic neuron. (\cref{RBIF} (a))
The overall dynamic of BIF neuron can be expressed as follows:
\begin{align}
    \bm{H}^l(t) &= \bm{V}^l(t-1) + \bm{W}^l\bm{s}^{l-1}(t)\lambda^{l-1}, \label{eq:1} \\
    \bm{V}^l(t) &= \bm{H}^l(t)  - \bm{s}^{l}(t)\lambda^l. \label{eq:2}
\end{align}
where $\bm{H}^l(t)$ and $\bm{V}^l(t)$ represent the membrane potential before and after firing. $\bm{W}^l$ denotes the synaptic weight between layer $l-1$ and layer $l$. To minimize information loss, we adopt the ``reset-by-subtraction" mechanism \cite{rueckauer2017conversion}. Here, $\bm{s}^{l}(t)$ denotes the bipolar output spikes at time step $t$ and $\lambda^l$ represents the threshold of layer $l$. We mitigate the sequential error by allowing $\bm{s}^{l}(t)$ to be either positive or negative while setting a spike tracer $\bm{S}^l(t)$ to record the sum of spikes. The firing rules can be described by the equations below.
\begin{align}
\bm{S}^l(t) &= \bm{S}^l(t-1) + \bm{s}^l(t), \label{eq:3}
\end{align}
where $\bm{S}^l(t)=0, 1,...,S^l_{\textrm{max}}$.
\begin{align}
\bm{s}^l(t) &= 
\begin{cases}
1 ,& \bm{H}^l(t) \geq \lambda^{l} ~ \& ~ \bm{S}^l(t-1)<S^l_{\textrm{max}} \\
0 ,&  \textrm{others} \\
-1 ,&  \bm{H}^l(t) < 0 ~ \& ~ \bm{S}^l(t-1) > 0 
\end{cases}.
\label{eq:4}
\end{align}

\subsubsection{Recurrent Bipolar Integrate-and-Fire Neuron}

As RNN introduces external recurrent connections, the computation graph is different from linear and convolution layers. Accordingly, the pattern of BIF is not compatible with RNN cells, because it will lead to illegal non-spiking forms of recurrent connection after conversion. To address the problem, we propose a novel neuron called the recurrent bipolar integrate and fire (\Cref{RBIF} (b) RBIF) neuron. The neural dynamics of RBIF is defined as:

\begin{figure}[t]
\centering
\includegraphics[width=0.9\textwidth]{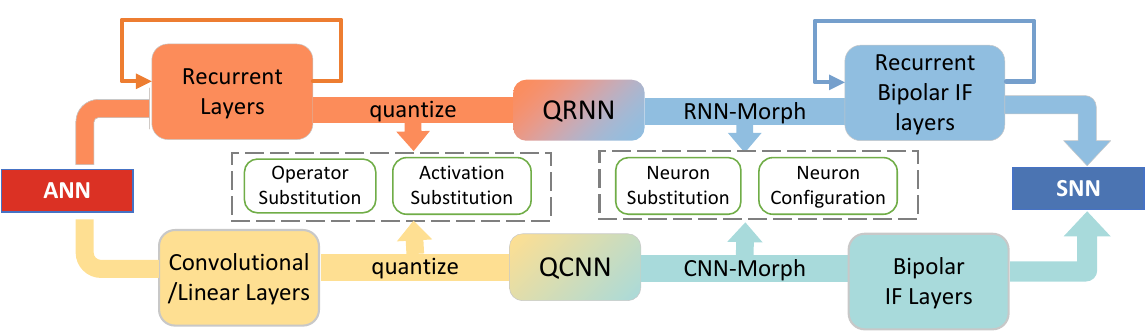}
\caption{\textbf{Conversion pipelines}. Conversion pipeline has two branches, where the top one is RNN-Morph and the bottom one is CNN-Morph. In general, both of the sub-pipelines can be divided into two steps: quantization and Neuron-Morph.} 
\label{fig:pipeline}
\end{figure}
\begin{align}
    \bm{H}_{k}^l(t) &= \bm{V}_{k}^l(t-1) + \bm{W}_{ih}\bm{s}_{k}^{l-1}(t)\lambda^{l-1}+\bm{W}_{hh}\bm{s}_{k-1}^l(t)\lambda^l, \label{eq:5} \\
    \bm{V}_{k}^l(t) &= \bm{H}_{k}^l(t)  - \bm{s}_{k}^{l}(t)\lambda^l. \label{eq:6}
\end{align}

Here we use subscript $k$ to distinguish SNN time-step $t$, which represents the $k$-th input of the RNN sequence. $\bm{W}_{ih}$ and $\bm{W}_{hh}$ denote the learnable input-hidden and hidden-hidden weights respectively. As shown in \cref{RBIF} (b), for the k-th input of a sequence and the l-th layer, the potential of RBIF at time-step $t$ ($\bm{H}(t)$) depends on three parts, the inherited potential $\bm{V}(t-1)$, the charge of the previous layer $\bm{X}^{l-1}$ ($\bm{s}_{k}^{l-1}(t)\lambda^{l-1}$ in \cref{eq:5}), and the output of the $k-1$-th RBIF at the same time-step $\bm{s}_{k-1}^l(t)$. Note that, to avoid sequential errors, we adopt the same firing rules as described in \cref{eq:3,eq:4}.


\subsection{Conversion Pipelines} \label{pipeline}
As illustrated in \cref{fig:pipeline}, QCRC can simultaneously convert different layers to their corresponding SNN layers via two sub-pipelines, which is versatile and suitable for the compound model (\textit{i.e.}, model that contains different types of layers), such as CRNNs.
We design two conversion pipelines for different types of layers in networks, which we call CNN-Morph ($CNN \rightarrow QCNN \rightarrow BIF SNN$) and RNN-Morph ($RNN \rightarrow QRNN \rightarrow RBIF SNN$). In brief, the conversion pipeline can be divided into two steps: the quantization process and the Neuron-Morph process. 

\subsubsection{Quantization.}
\textit{(1) Operator Substitution:} The first step is to make sure all operators in the original ANN are compatible with the SNN. For example, all activation functions should be ReLU based on equivalence requirements before training at full precision. In addition, max-pooling should be replaced by average-pooling because computing maxima with spiking neurons is non-trivial \cite{rueckauer2017conversion}.  
\textit{(2) Activation Substitution:} In this step, the ReLU function is replaced by the quantized ReLU function, where the lower bound $a$ is set to $0$ and upper bound $b$ set to $L$. After the configuration, the quantized ANN is trained using the protocols defined in \cite{lsq,bhalgat2020lsq+}.


\subsubsection{Neuron-Morph.}
\textit{(1) Neuron Substitution:} Benefiting from neuronal equivalence (\cref{Theoretical Equivalence in QCRC}), the synaptic weights of a quantized ANN can be directly mapped to their corresponding SNNs. Specifically, BIF neurons are converted from convolutional/linear neurons, while RBIF neurons are converted from recurrent neurons.   
\textit{(2) Neuron Configuration:} The last step of conversion is to configure the BIF/RBIF neuron attributes (\textit{i.e.}, $\lambda^l, S_{max}^l, V_k^l(0)$) and set the s-analog encoding method for input and bias based on QCRC equivalence requirements. The s-analog encoding is the prerequisite for conversion, that is to make sure the inputs to the $l$ layer of ANN and SNN are the same. Two operations will be performed: a) the current $X$ will be charged into the network only at the first time step, otherwise the input is equal to zero; b) turn off the bias term calculations after the first time step.


\subsection{Theoretical Equivalence in QCRC} \label{Theoretical Equivalence in QCRC}

\begin{table}[t]
\renewcommand\arraystretch{1.2}
\caption{Summary of notations in this paper. 
} \label{tab:notations}
\centering
\scalebox{0.8}
{
\begin{threeparttable}
\begin{tabular}{c|c|c|c} 
\hline
 \textbf{Symbol}  & \textbf{Definition}     & \textbf{Symbol}   & \textbf{Definition}\\ \hline
 $l$  & Layer index      & $\bm{s_{k}^l(t)}$     & Output spike for the k-th input at time-step t \\ 
 $k$  & RNN input index   & $\bm{S_{k}^l(t)}$     & Spike Tracer\tnote{2} ~ at time-step t\\ 
 $t$  & SNN time-step    & $S^l_{\textrm{max}}$ & Maximum value in spike tracer \\
 $\bm{H}_{k}^l(t)$  & Potential before firing   &  $\bm{x_{k}^{l}(t)}$  & UPP\tnote{1} ~ for the k-th input at time-step t\\
 $\bm{V}_{k}^l(t)$  & Potential after firing     &  $\bm{X}_{k}^l(t)$  & UPP Tracer at time-step t   \\
 $T$   & Total time-step   &  $n$ &  Quantization level in ANN\\
 $\lambda^l$      & Trainable threshold in ANN &   $s$ & Learnable quantization scale in ANN\\
  $\bm{W}_{hh}^l$   &   Learnable hidden-hidden weights  & $\lfloor \cdot \rceil$        & Round operation  \\
 $\bm{W}_{ih}^l$    & Learnable input-hidden weights & $clip(x, a, b)$      & Clip function that limits x between a and b  \\  \hline
\end{tabular}
        \begin{tablenotes}
		\footnotesize
		\item[1] Unweighted postsynaptic potential 
            \item[2] Tracer records the sum of the first t values.
	\end{tablenotes}
\end{threeparttable}
}
\end{table}
\label{zhengming}

\begin{theorem} \label{throrem1}
Assume a quantized CNN with ReLU activation function parameterized by $\bm{W^l}$ is converted to a BIFSNN based on CNN-Morph and s-analog encoding is adopted, then the accumulated outputs of the SNN are equal to the quantized CNN outputs when T is long enough that remaining membrane potential is insufficient to fire a spike.
\end{theorem}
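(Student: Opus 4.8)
The plan is to telescope the membrane-potential recursion over the SNN time axis and then match the resulting accumulated quantity against the quantized ReLU, organised as an induction on the layer index $l$. First I would substitute \cref{eq:1} into \cref{eq:2} to obtain the single-line update $\bm{V}^l(t)=\bm{V}^l(t-1)+\lambda^{l-1}\bm{W}^l\bm{s}^{l-1}(t)-\lambda^l\bm{s}^l(t)$ and sum it from $t=1$ to $T$. Since the spike tracer obeys $\bm{S}^l(T)=\sum_{t=1}^{T}\bm{s}^l(t)$ with $\bm{S}^l(0)=\bm{0}$ by \cref{eq:3}, the sum telescopes to
\begin{equation}
\lambda^l\bm{S}^l(T)=\bm{W}^l\big(\lambda^{l-1}\bm{S}^{l-1}(T)\big)-\big(\bm{V}^l(T)-\bm{V}^l(0)\big).
\label{eq:plan-tele}
\end{equation}
This already isolates the two objects I must control: the accumulated weighted input $\lambda^{l-1}\bm{S}^{l-1}(T)$, which the inductive hypothesis handles, and the residual potential $\bm{V}^l(T)-\bm{V}^l(0)$, which the stopping condition of the theorem handles.

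Next I would set up the induction. For the base case I invoke s-analog encoding: because the analog current is injected only at $t=1$ and the bias is active only at $t=1$, the accumulated weighted input delivered to the first layer equals the ANN input exactly, so the term $\bm{W}^l(\lambda^{l-1}\bm{S}^{l-1}(T))$ in \cref{eq:plan-tele} coincides with the ANN pre-activation $\bm{z}^l$ whenever the hypothesis $\lambda^{l-1}\bm{S}^{l-1}(T)=\hat{\bm{a}}^{l-1}$ holds at layer $l-1$. Granting this, \cref{eq:plan-tele} reduces the whole claim to showing that the integer vector $\bm{S}^l(T)$ equals the clipped and rounded code $\mathrm{clip}(\lfloor \bm{z}^l/\lambda^l \rceil,0,S^l_{\textrm{max}})$ emitted by the quantized ReLU, after which $\lambda^l\bm{S}^l(T)=\hat{\bm{a}}^l$ once the dequantization scale is identified with $\lambda^l$ by the Neuron-Configuration step and $S^l_{\textrm{max}}$ with the quantization level $n$.

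The core of the argument is characterizing the terminal residual under the hypothesis that $T$ is large enough that no further spike can fire. Once layer $l-1$ has stopped firing, the input term in \cref{eq:1} vanishes and layer $l$ simply relaxes its stored potential to a fixed point. I would then argue from the firing rules \cref{eq:4} that at this fixed point $\bm{V}^l(T)$ falls into the half-open band $[0,\lambda^l)$: if $\bm{V}^l(T)\ge\lambda^l$ a positive spike would still fire unless the tracer has saturated at $S^l_{\textrm{max}}$, and if $\bm{V}^l(T)<0$ a negative spike would still fire unless the tracer sits at $0$. These two saturation escapes are exactly what realise the upper and lower arms of the $\mathrm{clip}$, the in-band residual realises the rounding remainder, and the half-threshold initialisation $\bm{V}^l(0)=\lambda^l/2$ converts the floor produced by the dynamics into $\lfloor\cdot\rceil$. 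Substituting back into \cref{eq:plan-tele} closes the induction.

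I expect the main obstacle to be the bipolar bookkeeping in this terminal-state analysis. Because the BIF neuron may emit positive spikes early, before all negative contributions from layer $l-1$ have arrived, and later retract them through negative spikes, I must verify that the tracer dynamics actually settle rather than oscillate and that the net count $\bm{S}^l(T)$ at settling is independent of the spike ordering. This is precisely the mechanism that removes the sequential error and makes the conversion lossless rather than merely asymptotic, so checking it carefully, together with matching the three saturation cases of \cref{eq:4} edge-for-edge with the three regimes of $\mathrm{clip}$, is where the real work lies; the telescoping in \cref{eq:plan-tele} and the scale bookkeeping are otherwise routine.
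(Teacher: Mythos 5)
Your proposal follows essentially the same route as the paper's appendix proof: combining \cref{eq:1} and \cref{eq:2}, telescoping over $t=1,\dots,T$ via the spike tracer with $\bm{S}^l(0)=0$, invoking the stopping condition to replace the residual $\bm{V}^l(T)$ by a floor, realising the $\mathrm{clip}$ bounds through tracer saturation at $0$ and $S^l_{\textrm{max}}$, and matching $\lambda^l=s$, $S^l_{\textrm{max}}=n$, $\bm{V}^l(0)=b+0.5s$ so the half-threshold pre-charge turns $\lfloor\cdot\rfloor$ into $\lfloor\cdot\rceil$. The only difference is that you make explicit two steps the paper leaves implicit --- the induction over the layer index and the verification that the bipolar dynamics actually settle with $\bm{V}^l(T)\in[0,\lambda^l)$ up to saturation, independent of spike ordering --- which tightens rather than changes the argument.
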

\begin{proof}
\renewcommand{\qedsymbol}{}
\Cref{throrem1} proof is in the appendix.
\end{proof}

\begin{theorem} \label{throrem2}
Suppose an RNN with ReLU activation function, parameterized by $\bm{W}_{ih}$ and $\bm{W}_{hh}$, is quantized into $n$ quantization level by quantization scale $s$:  
\begin{equation}
    \bm{h}_{k} = s \cdot clip(\lfloor \dfrac{\bm{W}_{ih}\bm{x}_{k}+b_{ih}+\bm{W}_{hh}\bm{h}_{k-1}+b_{hh}}{s} \rceil,0,n). \label{qrnn}
\end{equation}
If an RBIFSNN is converted from the QRNN with $\bm{V}_{k}^{l}(0) = 0.5s$, $\bm{S}_{max}^{l} = n$, $\lambda^l = s$ and the s-analog encoding is adopted, then for any $k$-th input of the RNN sequence, the accumulated outputs of the SNN is equal to the QRNN output:
\begin{equation}
    \bm{X}_{k}^{l}(T) = \bm{h}_{k}, \label{conclusion}
\end{equation}
when T is long enough that the remaining membrane potential is not sufficient to fire a spike.
\end{theorem}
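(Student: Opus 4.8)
The plan is to mirror the strategy behind \Cref{throrem1} but to carry the extra recurrent term and the sequence index $k$ through an induction. First I would make the bookkeeping explicit: writing the unweighted postsynaptic potential as $\bm{x}_k^l(t) = \bm{s}_k^l(t)\lambda^l$ so that its tracer satisfies $\bm{X}_k^l(t) = \lambda^l \bm{S}_k^l(t) = s\,\bm{S}_k^l(t)$, with $\bm{S}_k^l(t) \in \{0,\dots,n\}$ enforced by the firing rules of \cref{eq:3,eq:4} together with $S_{\max}^l = n$. This turns the target \cref{conclusion} into the integer statement $\bm{S}_k^l(T) = clip(\lfloor z/s\rceil, 0, n)$, where $z = \bm{W}_{ih}\bm{x}_k + b_{ih} + \bm{W}_{hh}\bm{h}_{k-1} + b_{hh}$ is the pre-activation of the QRNN in \cref{qrnn}.

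The central algebraic step is to telescope the membrane dynamics \cref{eq:5,eq:6}. Substituting \cref{eq:6} into \cref{eq:5} and summing over $t = 1, \dots, T$ collapses the potential terms into $\bm{V}_k^l(T) - \bm{V}_k^l(0)$, while the weighted inputs accumulate into tracers; invoking the s-analog encoding (which injects the external current and the biases $b_{ih}+b_{hh}$ only at $t=1$) yields the exact identity
\[
\bm{X}_k^l(T) = \bm{W}_{ih}\bm{X}_k^{l-1}(T) + \bm{W}_{hh}\bm{X}_{k-1}^l(T) + b_{ih} + b_{hh} + \bm{V}_k^l(0) - \bm{V}_k^l(T).
\]
I would then run an induction on the sequence index $k$, with base case $\bm{h}_0 = 0$ (zero initial hidden state, hence no presynaptic recurrent spikes). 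The inductive hypothesis $\bm{X}_{k-1}^l(T) = \bm{h}_{k-1}$, together with the input equivalence $\bm{X}_k^{l-1}(T) = \bm{x}_k$ supplied by the preceding (s-analog encoded or \cref{throrem1}-converted) layer, reduces the identity to $\bm{X}_k^l(T) = z + 0.5s - \bm{V}_k^l(T)$ once $\bm{V}_k^l(0) = 0.5s$ is inserted.

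It remains to pin down the residual $\bm{V}_k^l(T)$. The key lemma is that, for $T$ large enough, the bipolar firing mechanism drives the neuron into a no-fire steady state: positive spikes discharge overshoot and negative spikes correct undershoot (the sequential-error cancellation), until either $0 \le \bm{V}_k^l(T) < s$ holds in the unsaturated regime, or the tracer is pinned at a bound $0$ or $n$. Feeding $\bm{V}_k^l(T) \in [0,s)$ back into the reduced identity forces $\bm{S}_k^l(T) = \lfloor z/s \rceil$ (the offset $0.5s$ is exactly what converts subtraction-reset accumulation into round-to-nearest), while the tracer bounds $0 \le \bm{S}_k^l(T) \le n$ realize the two $clip$ endpoints; checking that the boundary pre-activations ($z < 0.5s$ and $z \ge ns - 0.5s$) are consistent with stability at $\bm{S}=0$ and $\bm{S}=n$ completes the case analysis and gives $\bm{X}_k^l(T) = \bm{h}_k$.

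I expect the main obstacle to be the steady-state lemma in the last paragraph, for two reasons. First, unlike the feedforward case of \cref{throrem1}, the recurrent drive $\bm{W}_{hh}\bm{s}_{k-1}^l(t)\lambda^l$ is not delivered all at $t=1$ but is smeared across time steps as the $(k-1)$-th neuron fires, so I must argue that this neuron converges in finite time first and only then bound the additional steps the $k$-th neuron needs, making the quantifier \emph{$T$ long enough} genuinely depend on the whole sequence prefix. Second, establishing that the bipolar dynamics actually terminate (rather than oscillating between $+1$ and $-1$ emissions) requires a monotonicity or potential-function argument showing the residual is eventually trapped in $[0,s)$; this is where the precise interaction between the firing thresholds, the $S_{\max}^l$ cap, and the $0.5s$ initialization must be handled carefully.
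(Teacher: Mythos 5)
Your proposal is correct and follows essentially the same route as the paper: telescope \cref{eq:5,eq:6} over $t=1,\dots,T$, divide by $\lambda^l=s$, use the $0.5s$ pre-charge to turn the floor into round-to-nearest, and read the $clip$ endpoints off the tracer bounds $0\le\bm{S}_k^l(T)\le n$. You make explicit two things the paper leaves implicit, and both are improvements in rigor rather than a different approach. First, the induction on the sequence index $k$ (with $\bm{h}_0=0$, and input equivalence $\bm{X}_k^{l-1}(T)=\bm{x}_k$ from the preceding layer): the paper stops at the layer-wise recurrence \cref{eq:12} and declares it ``equivalent'' to \cref{qrnn}, which tacitly assumes exactly your inductive hypothesis $\bm{X}_{k-1}^l(T)=\bm{h}_{k-1}$. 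Second, your bias bookkeeping (injecting $b_{ih}+b_{hh}$ at $t=1$ via s-analog encoding) actually resolves an internal inconsistency in the paper: the theorem statement sets $\bm{V}_k^l(0)=0.5s$, while the paper's proof requires $\bm{V}_k^l(0)=0.5s+b_{ih}+b_{hh}$; the two are reconciled precisely by the pipeline's rule of charging biases only at the first time step, as you do. One caveat: the ``steady-state lemma'' you flag as the main obstacle is more than the theorem demands --- the statement is conditional on $T$ being long enough that no further spike can fire, and under the firing rules \cref{eq:4} that hypothesis directly yields the trichotomy $\bm{V}_k^l(T)\in[0,\lambda^l)$, or $\bm{V}_k^l(T)\ge\lambda^l$ with $\bm{S}_k^l(T)=S^l_{\max}$, or $\bm{V}_k^l(T)<0$ with $\bm{S}_k^l(T)=0$, which are exactly your three cases (round, clip at $n$, clip at $0$). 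Proving that the bipolar dynamics with time-smeared recurrent drive actually reach this state in finite $T$ would be needed to justify the paper's unconditional ``lossless'' claim, and neither the paper nor your sketch supplies that argument, but it is not required for the theorem as stated.
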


\begin{proof}
\renewcommand{\qedsymbol}{}
The key idea of QRNN-RBIF conversion is that for each RNN sequence input, the activation value of the RNN neuron can be equivalently mapped to the accumulated output of the SNN neuron. Based on this, we first combine \cref{eq:5} and \cref{eq:6} to get the potential update equation:

\begin{equation}
    \bm{V}_{k}^l(t) - \bm{V}_{k}^l(t-1) =\bm{W}_{ih}\bm{s}_{k}^{l-1}(t)\lambda^{l-1}+\bm{W}_{hh}\bm{s}_{k-1}^l(t)\lambda^l - \bm{s}_{k}^{l}(t)\lambda^l. \label{eq:7}
\end{equation}
By summing up \cref{eq:7} from 1 to inference time-step $T$, we have:
\begin{equation}
\label{eq:8}
 \bm{V}_{k}^l(T) - \bm{V}_{k}^l(0) = \bm{W}_{ih}\lambda^{l-1}\sum_{i=1}^{T} \bm{s}_{k}^{l-1}(t)+\bm{W}_{hh}\lambda^l\sum_{i=1}^{T}\bm{s}_{k-1}^l(t) - \lambda^l\sum_{i=1}^{T}\bm{s}_{k}^{l}(t),
\end{equation}
where $\sum_{i=1}^{T}\bm{s}_{k}^{l}(t) = \sum_{i=1}^{T}(\bm{S}_{k}^{l}(t)-\bm{S}_{k}^{l}(t-1)) = \bm{S}_{k}^{l}(T) - \bm{S}_{k}^{l}(0)$ according to \cref{eq:3}. If we set $\bm{S}_{k}^{l}(0)$ = 0, \cref{eq:8} can be simplified as
\begin{equation}
\label{eq:9}
 \bm{V}_{k}^l(T) - \bm{V}_{k}^l(0) = \bm{W}_{ih}\lambda^{l-1}\bm{S}_{k}^{l-1}(T)+\bm{W}_{hh}\lambda^l\bm{S}_{k-1}^l(T) - \lambda^l\bm{S}_{k}^{l}(T).
\end{equation}
Then, we divide both sides of \cref{eq:9} by the threshold $\lambda^{l}$. With additional simple transformation, we can obtain the expression for spike tracer:
\begin{equation}
    \label{eq:10}
    \bm{S}_{k}^{l}(T) =\dfrac{(\bm{W}_{ih}\lambda^{l - 1}\bm{S}_{k}^{l-1}(T)+\bm{W}_{hh}\lambda^l\bm{S}_{k-1}^l(T)+\bm{V}_{k}^l(0)-\bm{V}_{k}^l(T) )}{\lambda^{l}}.   
\end{equation}
When the simulation time-steps $T$ is long enough so that the remaining membrane potential $\bm{V}_{k}^l(T)$ is insufficient to fire a spike, the \cref{eq:10} can be written as
\begin{equation}
    \label{eq:11}
    \bm{S}_{k}^{l}(T) =\left\lfloor \frac{\bm{W}_{ih}\lambda^{l - 1}\bm{S}_{k}^{l-1}(T)+\bm{W}_{hh}\lambda^l\bm{S}_{k-1}^l(T)+\bm{V}_{k}^l(0)}{\lambda^{l}} \right\rfloor,   
\end{equation}
where $\bm{S}_{k}^l(T)=0, 1,...,S^l_{\textrm{max}}$. By multiplying both sides of the \cref{eq:11} by $\lambda^{l}$ and inserting the clip function, we can get the final equation:
\begin{equation}
    \label{eq:12}
    \bm{X}_{k}^{l}(T) =\lambda^{l}\cdot clip (\lfloor \dfrac{\bm{W}_{ih}\bm{X}_{k}^{l-1}(T)+\bm{W}_{hh}\bm{X}_{k-1}^l(T)+\bm{V}_{k}^l(0)}{\lambda^{l}} \rfloor, 0, \bm{S}^l_{max}),   
\end{equation}
where $\bm{X}_{k}^{l}(T)=\lambda^{l}\bm{S}_{k}^{l}(T)$ by definition.

\Cref{eq:12} describes the relationship between unweighted postsynaptic potential of RBIF neurons in adjacent layers.
By setting $\lambda^{l} = s$, $\bm{S}_{max}^{l} = n$, $  \bm{V}_{k}^{l}(0) = 0.5s + b_{ih} + b_{hh}$, \cref{eq:12} and \cref{qrnn}
are equivalent, which will lead to the conclusion in \cref{conclusion}. Note that, setting $\bm{V}_{k}^{l}(0) = 0.5s$, which is called pre-charge method in \cite{bu2022optimal}, will make operator $\lfloor \cdot \rfloor$ and operator $\lfloor \cdot \rceil$ equal.
\end{proof}

\section{Experiments}
In this section, we obtain optimal SNNs in sequence learning via CRNN-to-SNN conversion. We validate the effectiveness of our method with other state-of-the-art learning-based approaches and conversion-based approaches, demonstrating the advantages of our method on different datasets (\textit{i.e.}, benchmark S-MNIST/pS-MNIST \cite{mnist} and collision avoidance dataset \cite{icra}).
We further experimentally demonstrate the lossless conversion of QCRC and the effectiveness of s-analog encoding in the ablation study. 

\subsection{Implementation details}
The experiments exactly follow the quantization and conversion stages as introduced in \cref{pipeline}. Both ANN quantization
training and SNN implementation are carried out with PyTorch. Unless otherwise specified, the optimizer is Adam, the learning rate scheduler is the cosine annealing schedule.
\subsubsection{S-MNIST and pS-MNIST}
 We only apply normalization transform to the dataset. The main hyper-parameters of the models follow their corresponding papers \cite{lsnn,huawei,psn}. Training epoch and batch size are 200 and 256 for all models. The learning rate of our model is 0.0002. The cross-entropy loss (CE) is used to evaluate the difference between the estimated value and the actual value.
\subsubsection{Obstacle detection and avoidance}
The total dataset (including 20 training, 5 validation traces) is split into multiple sub-sequences of length 32 and fed into the model sequentially. The input of the LIDAR scanner will be fed to the main structure, while the estimated robot pose will be firstly clipped to the range of -1.0 to 1.0 and then concatenated with the output of layer 5 before sent to the next layer. We follow a similar network as \cite{icra}, consisting of an RNN preceded by a set of convolutional layers. The epoch and batch size are set to 1000 and 32 respectively. We use a fixed learning rate of 0.0001 to train the model with the mean square error (MSE) loss function.

\subsection{Sequential MNIST}

The sequential- and permuted-sequential MNIST (S/PS-MNIST) \cite{mnist} are widely used benchmarks to verify the learning ability for long-term dependencies. The image will be divided into 784 pixels and sent to the network pixel by pixel. The networks are asked to predict the class of MNIST image only when all 784 pixels are fed sequentially to the recurrent network. Therefore, achieving high accuracy on the ``pixel-by-pixel MNIST" problem is not easy because neurons must have the ability to learn from the long contexts.

Benefiting from the high scalability of our method, we use indRNN cell \cite{li2018independently} as the original RNN model. We set the quantization step and time-steps to 128 and 512. A performance comparison is given in \Cref{tab:S-MNISTsota}. RBIF reads the image pixel by pixel without any extra encoding process, just as the same as the LSTM. It outperforms all models, achieving 99.16\% and 94.95\% classification accuracy on S-MNIST and PS-MNIST respectively. Note that, the accuracy of pr-ALIF (94.3\%) on PS-MNIST is not included for comparison because the adoption of a sliding window is unfair to other models.
We also compare our method with the conversion-based method. It turns out that performance deteriorates rapidly as the sequence gets longer due to the propagation of sequential error, which we will explain further in \cref{effectofananlog}.
\begin{table}[t]
\centering 
\caption{Test accuracy (\%) of S-MNIST and PS-MNIST. - refers to data not reported or cannot be reproduced. Current and compared best results are in \textbf{bold} and \colorbox{gray!20}{grey} respectively.
}
\begin{threeparttable}
    \begin{tabular}{ccccccccc}
         \toprule
         \diagbox{Dataset}{Neurons}~  
         &~ RBIF ~
     & ~\begin{tabular}[c]{@{}c@{}}LSTM \\ \cite{hochreiter1997long} \end{tabular}  
     & ~\begin{tabular}[c]{@{}c@{}}{pr-ALIF} \\ \small{\cite{huawei2}} \end{tabular}  
     & ~\begin{tabular}[c]{@{}c@{}}ALIF\\ \cite{huawei} \end{tabular}  
     & ~\begin{tabular}[c]{@{}c@{}}MPSN\\ \cite{psn} \end{tabular}  
     & ~\begin{tabular}[c]{@{}c@{}}LSNN\\ \cite{lsnn} \end{tabular}  
         &~LIF ~
         \\ \midrule                  
\begin{tabular}[c]{@{}c@{}} S-MNIST \end{tabular} & \textbf{99.16} & 98.2& \cellcolor{gray!20} 98.7 &  97.82 &63.6 & 96.4  &  28.6
         \\
 PS-MNIST  &   \textbf{94.95}  &88 & 94.3 \tnote{1}  & \cellcolor{gray!20} 91 &34.9 & -  &  23.9
         \\
         \bottomrule
    \end{tabular}
    \begin{tablenotes}
    \item[1] not included for comparison due to the adoption of a sliding window. 
    \end{tablenotes}
    \label{tab:S-MNISTsota}
\end{threeparttable}
\end{table}

\subsection{Obstacle detection and avoidance} \label{obstacle}
To explore the application of SNNs in sequential robotic tasks, we conduct robot navigation experiments using the dataset proposed in \cite{icra}.
The objective of this task is to navigate a Pioneer 3-AT mobile robot safely through obstacles. Specifically, the network input comprises data streams from a 270-degree 2D LiDAR scanner and a time series of estimated robot poses sampled at 10Hz. By generating a decision in the form of a target angular velocity, the network can maneuver the robot safely around the obstacles.


\Cref{tab:collisionavodiance} reports the results of the collision avoidance dataset, our method outperforms the others at every time-step with a lower MSE loss. Note that, because of the incompatibility problem mentioned in \cref{intro}, we add a recurrent structure to neurons in compared works, which is compatible to their conversion algorithm. It clearly shows that the IF neuron still suffers a lot from the conversion errors in QCFS. The calibrating offset spikes in \cite{offset} can bridge the gap between ANN and SNN in a certain way, but cannot control the errors from the source. Fast-SNN can achieve nearly lossless performance but cannot eliminate the sequential errors in the end. In contrast, QCRC achieves the lowest loss of 0.0569 when $T=8$, which is equal to the loss of ANN. Even when the time-steps is 2, we can achieve a very low loss of 0.1180.

\begin{table}[b]
    \caption{Experiment results on collision avoidance dataset. $\sigma$ denotes the time-steps to calculate offset spikes. - means no result can be obtained under L = 8 according to \cite{hu2023fast}. Best results are in \textbf{bold}. 
    }
    \label{tab:collisionavodiance}
\centering
\renewcommand\arraystretch{1.2}
\scalebox{1}
{
\begin{tabular}{cccccccc}\hline
Method & Neuron               & ANN & T=2  & T=4  & T=8  & T=16  & T=32 \\ \hline
QCFS \cite{qcfs} & IF & 0.0907 & 0.2726 & 0.1929 & 0.1469 & 0.1208 & 0.1078 \\
Offset ($\sigma = 6$) \cite{offset} & IF & 0.0907 & 0.2187 & 0.1414 & 0.1125 & 0.0985 &
0.0969 \\
Fast-SNN \cite{hu2023fast} & signed IF & 0.0669 & 0.2364 & 0.1356 & 0.0694 & -     & -    \\
\textbf{Ours} & BIF/RBIF & 0.0569 & \textbf{0.1180} & \textbf{0.0780} & \textbf{0.0569} &  0.0569 & 0.0569
\\ \hline
\end{tabular}
}
\end{table}

\begin{figure}[t]
\centering
\includegraphics[width=0.9\textwidth]{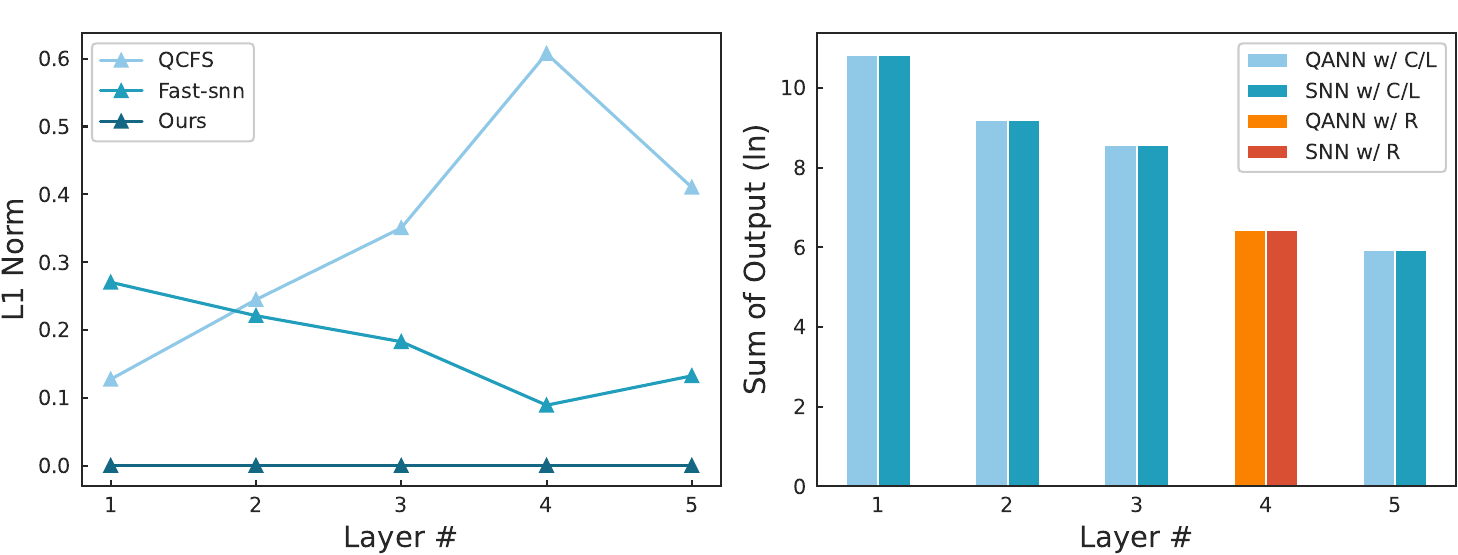}
\caption{\textbf{Conversion error study.} (left) The L1 Norm between the feature map of accumulated spiking output and the feature map of quantized activation output. (right) The sum output of the activation layers in QCRC. ``C/L" refers to convolutional/linear layers and ``R" denotes recurrent layers.}.
\label{last}
\end{figure}
\subsection{Ablation Study}
\subsubsection{Conversion Error Analysis}
    We perform two-fold validation on the equivalence of QCRC. We use the dataset and network in \cref{obstacle}. The choice of CRNN network can make the analysis more comprehensive since it contains three commonly used layers (\textit{i.e.}, linear, convolutional, recurrent layers). To measure the conversion error straightforwardly, we use a batch of data to visualize the L1 Norm (a.k.a. Manhattan distance) between QANN and its counterpart SNN for intermediate activation layers, as shown in the left of \cref{last}. It is shown that the use of IF neurons makes the L1 Norm in QCFS remain at a large value due to the accumulating sequential error. 
Although Fast-snn proposes the signed IF neuron and layer-wise fine-tuning scheme to mitigate the sequential error, the m-analog encoding still leads to in-equivalence at the model level and degrades the performance at deeper layers.
Compared with them, only QCRC reaches the true lossless level (\textit{i.e.}, the L1 norm between QANN and converted SNN is 0). Furthermore, the bar graphs we draw (right part in \Cref{last}) show that the sum of activations for each neuron layer in QANN and SNN is equal.

\begin{figure}[t]
\centering
\includegraphics[width=0.95\textwidth]{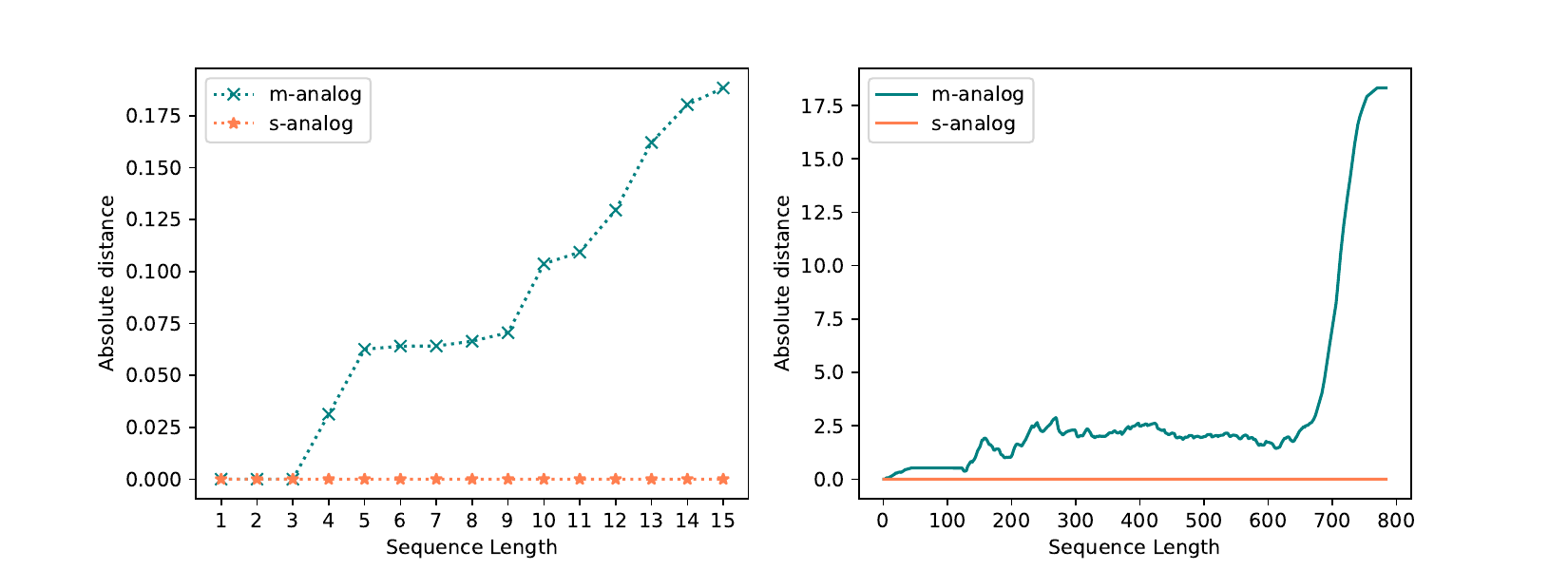}
\caption{The absolute distance between quantized ANN and SNN for two analog encoding methods. The results show the first 15 time steps in the sequence (left) and the entire sequence (right). }.
\label{fig:analog}
\end{figure}
\subsubsection{Effect of s-analog Encoding} \label{effectofananlog}
We perform experiments on sequential MNIST to explore the effects of different analog encoding methods. The m-analog encoding (used in \cite{qcfs,hu2023fast}) charges the current $X$ into the network at every time-step, while the s-analog encoding we use only charges X into the network at the first time step. 
We randomly select an image in MNIST and evaluate the two analog encodings using the same network with the same settings.
\Cref{fig:analog} depicts the results of the absolute distance between quantized ANN and SNN for the first 15 time steps in the sequence (left) and the entire sequence (right). We can see that the m-analog encoding only mitigates the sequential error. If the sequential error is not zero for one input (\textit{e.g.}, the fourth input of the sequence), it will propagate pixel by pixel by the recurrent structure and will be magnified by larger quantization error. When the sequence gets longer, the accumulated error will rapidly grow and become uncontrollable. Since we only need the output of the last time step, accumulated error will severely degrade the performance. 
In contrast, the adoption of s-analog encoding together with our conversion pipelines guarantees lossless conversion at every time-step in the sequence.

\section{Discussion and Conclusion}
This paper proposes a comprehensive QCRC framework to help SNNs overcome the challenge of not achieving ANN-level results in sequence learning, enabling SNNs to achieve results comparable to RNNs.  
To overcome the incompatibility problem of RNN cell, we propose RBIF neuron. Based on this, we further demonstrate the lossless CRNN-SNN conversion with the design of conversion pipelines and s-analog encoding.
The framework includes two sub-pipelines (\textit{i.e.}, CNN-Morph and RNN-Morph), which can support end-to-end conversion of complex models with both recurrent and convolutional structures into SNN and is not limited by the type of dataset. 
We are the first work to implement lossless RNN-SNN conversion on time series tasks. Our results show promising advantages compared to the state-of-the-art conversion- and learning-based methods. Our results answer the question in \cref{intro}: \emph{we can easily achieve ANN-level performance for SNNs in sequence learning via CRNN-SNN conversion.} We believe our work paves the way for the application of SNNs in time series tasks.

\subsubsection{Acknowledgment.}
This work is partially supported by National Key R\&D Program of China (2022YFB4500200), National Natural Science Foundation of China (Nos.62102257), Biren Technology–Shanghai Jiao Tong University Joint Laboratory Open Research Fund, Microsoft Research Asia Gift Fund, Shandong Normal University Undergraduate Research Fund.

\appendix
\section*{Appendix}
\setcounter{theorem}{0}
\begin{theorem} \label{throrem1}
Assume a quantized CNN with ReLU activation function parameterized by $\bm{W^l}$ is converted to a BIFSNN based on CNN-Morph and s-analog encoding is adopted, then the accumulated outputs of the SNN is equal to the quantized CNN output when T is long enough that remaining membrane potential is insufficient to fire a spike.
\end{theorem}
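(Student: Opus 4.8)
The plan is to mirror the structure of the QRNN--RBIF argument (\Cref{throrem2}) while dropping the hidden-hidden recurrent term, since the CNN-Morph neuron governed by \cref{eq:1,eq:2} is exactly the special case of the RBIF dynamics with $\bm{W}_{hh}=0$ and no sequence index $k$. First I would combine the pre-firing and post-firing potential updates in \cref{eq:1,eq:2} into a single increment identity
\[
\bm{V}^l(t) - \bm{V}^l(t-1) = \bm{W}^l \bm{s}^{l-1}(t)\lambda^{l-1} - \bm{s}^l(t)\lambda^l .
\]
Summing this telescoping relation from $t=1$ to the inference horizon $T$ collapses the left-hand side to $\bm{V}^l(T)-\bm{V}^l(0)$, and on the right I would invoke the spike-tracer relation from \cref{eq:3}, namely $\sum_{t=1}^{T}\bm{s}^l(t)=\bm{S}^l(T)-\bm{S}^l(0)$, together with the initialization $\bm{S}^l(0)=0$, to express everything in terms of the accumulated spike counts $\bm{S}^{l-1}(T)$ and $\bm{S}^l(T)$.

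Second, I would solve the resulting balance equation for $\bm{S}^l(T)$ by dividing through by $\lambda^l$. The crucial step is to argue that once $T$ is large enough that the leftover potential $\bm{V}^l(T)$ can no longer trigger a spike, the term $\bm{V}^l(T)/\lambda^l$ is absorbed into a floor operation, giving $\bm{S}^l(T)=\lfloor(\bm{W}^l\lambda^{l-1}\bm{S}^{l-1}(T)+\bm{V}^l(0))/\lambda^l\rfloor$. Multiplying back by $\lambda^l$ and reintroducing the clipping bounds that the firing rule \cref{eq:4} enforces — the upper bound $S^l_{\textrm{max}}$ from the guard $\bm{S}^l(t-1)<S^l_{\textrm{max}}$ and the lower bound $0$ from the negative-spike guard $\bm{S}^l(t-1)>0$ — yields the unweighted-postsynaptic-potential recursion
\[
\bm{X}^l(T) = \lambda^l \cdot clip\!\left(\left\lfloor \frac{\bm{W}^l \bm{X}^{l-1}(T) + \bm{V}^l(0)}{\lambda^l}\right\rfloor, 0, S^l_{\textrm{max}}\right),
\]
where $\bm{X}^l(T)=\lambda^l\bm{S}^l(T)$ by definition.

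Third, I would match this recursion against the quantized-ReLU layer of the source CNN by setting $\lambda^l=s$, $S^l_{\textrm{max}}=n$, and $\bm{V}^l(0)=0.5s+b^l$; the $0.5s$ pre-charge (as in \cite{bu2022optimal}) converts the floor $\lfloor\cdot\rfloor$ into round-to-nearest $\lfloor\cdot\rceil$, so the per-layer maps coincide exactly. Since the equivalence is expressed layer-wise in terms of $\bm{X}^{l-1}(T)$, I would close the argument by induction on the layer index $l$: the base case uses the s-analog encoding to guarantee that the layer-$0$ input of the SNN equals the CNN input (the current injected only at the first time-step, with the bias disabled thereafter), and the inductive step propagates $\bm{X}^{l-1}(T)$ equal to the quantized activation of layer $l-1$ forward through the recursion above. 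The main obstacle I anticipate is rigorously justifying the ``$T$ long enough'' hypothesis: one must show that the bipolar firing mechanism forces the residual potential $\bm{V}^l(T)$ into the half-open interval $[0,\lambda^l)$, so that the floor is exact and no over- or under-shoot survives. This is precisely where the negative-spike rule — absent in unipolar IF conversions — does the work of eliminating the sequential error and securing a genuinely lossless map.
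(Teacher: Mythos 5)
Your proposal follows essentially the same route as the paper's appendix proof: combine \cref{eq:1,eq:2} into the increment identity, telescope over $t=1,\dots,T$ using the spike tracer with $\bm{S}^l(0)=0$, absorb the residual potential into a floor when $T$ is large enough, reintroduce the clip bounds, and match the resulting recursion to the quantized-ReLU layer via $\lambda^l=s$, $S^l_{\textrm{max}}=n$, $\bm{V}^l(0)=0.5s+b$ with the pre-charge turning $\lfloor\cdot\rfloor$ into $\lfloor\cdot\rceil$. Your additions --- the explicit layer-wise induction grounded in the s-analog encoding, and the observation that one should verify the bipolar firing rule confines $\bm{V}^l(T)$ to $[0,\lambda^l)$ --- are refinements the paper leaves implicit rather than a different argument.
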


\begin{proof}
\renewcommand{\qedsymbol}{}
We first combine \cref{eq:1} and \cref{eq:2} to get the potential update equation:
\begin{equation}
    \bm{V}^l(t) - \bm{V}^l(t-1) = \bm{W}^l\bm{s}^{l-1}(t)\lambda^{l-1} - \bm{s}^{l}(t)\lambda^l. 
    \label{16}
\end{equation}
By summing up \cref{16} from 1 to inference time-step $T$, we have:
\begin{equation}
 \bm{V}^l(T) - \bm{V}^l(0) = \bm{W}^l\lambda^{l-1}\sum_{i=1}^{T} \bm{s}^{l-1}(t) - \lambda^l\sum_{i=1}^{T}\bm{s}^{l}(t).
 \label{17}
\end{equation}
where $\sum_{i=1}^{T}\bm{s}^{l}(t) = \sum_{i=1}^{T}(\bm{S}^{l}(t)-\bm{S}^{l}(t-1)) = \bm{S}^{l}(T) - \bm{S}^{l}(0)$ according to \cref{eq:3}. If we set $\bm{S}^{l}(0)$ = 0, \cref{17} can be simplified as:
\begin{equation}
 \bm{V}^l(T) - \bm{V}^l(0) = \bm{W}^l\lambda^{l-1}\bm{S}^{l-1}(T) - \lambda^l\bm{S}^{l}(T).
 \label{18}
\end{equation}
Then, we divide both sides of \cref{18} by the threshold $\lambda^{l}$. With additional simple transformation, we can obtain the expression for spike tracer:
\begin{equation}
    \bm{S}^{l}(T) =\dfrac{\bm{W}^l\lambda^{l - 1}\bm{S}^{l-1}(T)+\bm{V}^l(0)-\bm{V}^l(T)}{\lambda^{l}}   
    \label{20}
\end{equation}
When the simulation time-steps $T$ is long enough so that the remaining membrane potential $\bm{V}^l(T)$ is insufficient to fire a spike, \cref{20} can be rewritten as the expression of :
\begin{equation}
    \bm{S}^{l}(T) =\left\lfloor \frac{\bm{W}^l\lambda^{l - 1}\bm{S}^{l-1}(T)+\bm{V}^l(0)}{\lambda^{l}} \right\rfloor,
    \label{21}
\end{equation}
where $\bm{S}^l(T)=0, 1,...,S^l_{\textrm{max}}$. By multiplying both sides of the \cref{21} by $\lambda^{l}$, we can get the final equation:
\begin{equation}
    \bm{X}^{l}(T) =\lambda^{l}\cdot clip (\lfloor \dfrac{\bm{W}^l\bm{X}^{l-1}(T)+\bm{V}^l(0)}{\lambda^{l}} \rfloor, 0, \bm{S}^l_{max}), 
    \label{22}
\end{equation}
where $\bm{X}^{l}(T)=\lambda^{l}\bm{S}^{l}(T).$ by definition. \par
\Cref{22} describes the relationship between unweighted postsynaptic potential of BIF neurons in adjacent layers.

Considering a quantization CNN with quantization scale $s$ and quantization level $n$:
\begin{equation}
    \bm{X'} = s \cdot clip( \lfloor \dfrac{\bm{W}^l\bm{X}^{l-1} + b}{s} \rceil, 0, n).
    \label{23}
\end{equation}
If we set $\lambda^l = s$, $\bm{S}_{max}^{l} = n$, $\bm{V}^l(0) = b + 0.5s$, \cref{23} and \cref{22} are equivalent.

\end{proof}
\bibliography{reference}
\bibliographystyle{splncs04}
\newpage

\end{document}